\newcommand{\Var}[0]{\mbox{\texttt{Var}}}
\newcommand{\keywords}[1]{\par\addvspace\baselineskip
\noindent\keywordname\enspace\ignorespaces#1}
\def\code#1{\texttt{#1}}
\begin{document}

\long\def\comment#1{}

\title{Logical conditional preference theories}

\urldef{\mailP}\path|{cornelio, loreggia}@math.unipd.it|  
\urldef{\mailI} \path|vijay@saraswat.org|
\author{Cristina Cornelio\inst{1} \and Andrea Loreggia\inst{1}\inst{2} \and Vijay Saraswat\inst{2} } 
\institute{Padua University, Padua, Italy \\ \mailP \and IBM T.J. Watson Research Center, Yorktown Heights, New York, U.S.\\ \mailI} 
\maketitle

\begin{abstract}
CP-nets represent the dominant existing framework for expressing
qualitative conditional preferences between alternatives, and are used in a
variety of areas including constraint solving. Over the last fifteen
years, a significant literature has developed exploring semantics,
algorithms, implementation and use of CP-nets. 

This paper introduces a comprehensive new framework for conditional
preferences: {\em logical conditional preference theories} (LCP
theories). To express preferences, the user specifies arbitrary
(constraint) Datalog programs over a binary ordering relation on
outcomes.  We show how LCP theories unify and generalize existing
conditional preference proposals, and leverage the rich semantic,
algorithmic and implementation frameworks of Datalog.
\keywords{CP-nets, preferences, datalog, constraint logic programming}
\end{abstract}

\section{Introduction}
Qualitative conditional preferences on combinatorial domains are an
important area of study. The main framework is {\em CP-nets} \cite{boutilier2004cp}.  
Given a finite set of features, each with a finite domain of
values, the user specifies her preference order over the domain of
each feature using rules such as $a\, b: c \succ \bar{c}$ which
specify that if the attribute $A$ has value $a$ and attribute $B$ has
value $b$ then $c$ is to be preferred to $\bar{c}$ for attribute $C$. 
{\em Outcomes} (that is, assignment of values to all the features) are ordered according to the
so-called {\em ceteris paribus} interpretation: an outcome $s$ is
preferred to another outcome $t$ if they differ on the value of just
one feature, and the value in $s$ is preferred to the
one in $t$ (given the rest of $s,t$). Thus in CP-nets, preferences are
always of the form ``I prefer fish to meat, {\em all else being
  equal}''. The key computational tasks are checking for consistency
(preference ordering is acyclic), as well as optimizing and comparing
outcomes. 

CP-nets represent one end of the expressiveness/tractability
spectrum. While useful in practice, CP-nets are limited in
expressiveness: a rule may specify a preference for exactly one value
over another (in the same feature domain). {\em General CP-nets}
\cite{goldsmith2008computational} define CP-nets that
can be incomplete or locally
inconsistent. {\em CP-theories} \cite{Wilson04extendingcp-nets} are
 a specialized formalism, with its own {\em ad
  hoc} syntax  (rules are of the form $u:x \succ x'\,[W]$) and
semantics, in which preferences may be conditioned on {\em
  indifference} to certain features (those in the set $W$ above). For
example, one may say ``I prefer fish to meat, no matter what the
dessert is''. {\em Comparative preference
  theories} \cite{wilson09} permit preferences to be defined on a
set of features simultaneously. Along another direction,
\cite{cp+constraints} and \cite{hard+soft+cpnets} introduce the idea
of adding (hard and soft) constraints to CP-nets. Roughly speaking,
the added constraints need to be respected by valid outcomes.


This paper develops the fundamental idea that conditional
preferences can be directly expressed
in standard first order logic, as constrained Datalog
theories \cite{Ceri:1989:YAW:627272.627357,Kanellakis199526,Toman:1998:MEC:593226.593267}
 involving a binary preference relation \code{d(\_,\_)} on pairs
of outcomes. For instance, in the context of preferences over entrees
and desserts, the rule ``I prefer fish to meat, all else being equal''
may be written as:\footnote{The examples in this paper are written in
  Prolog and run in the XSB Prolog system, using tabling. Note
  that the clause is strictly speaking not a Datalog clause because it
  uses a function symbol \code{o/n}. However this function symbol is
  used just for convenience, and can be eliminated at the cost of
  increasing the arity of predicates such as \code{d} and \code{dom}.}
{\footnotesize
\begin{verbatim}
  d(o(fish,X),o(meat,X)).
\end{verbatim}}
\noindent and the rule ``I prefer fish to meat, no matter what the dessert is''
is written as:
{\footnotesize
\begin{verbatim}
  d(o(fish,X),o(meat,Y)).
\end{verbatim}}
\noindent More generally, a
{\em Logical Conditional Preference} (LCP) rule is of the form:
{\footnotesize
\begin{verbatim}
  d(o(X1,...Xn),o(Y1,...,Yn)) :- c, g1,...,gn.
\end{verbatim}}
\noindent where \code{c} is a constraint (possibly involving equalities), and \code{g1}, \ldots, \code{gn} are possibly
recursively defined predicates involving \code{d/2}. A predicate
\code{dom/2} is defined to be the transitive closure of \code{d/2} and
represents the dominance relation on outcomes.
{\footnotesize
\begin{verbatim}
  dom(X,Z):- d(X,Z), outcome(X), outcome(Z).
  dom(X,Z):- d(X,Y), dom(Y,Z).
\end{verbatim}}
\noindent The theory can be checked for consistency by simply ensuring that given the clause
{\footnotesize
\begin{verbatim}
  inconsistent :- dom(X,X).
\end{verbatim}}
\noindent the goal \code{inconsistent} cannot be established. Hard constraints
\code{C} are specified by adding them to the body of the clause
defining legal outcomes: 
{\footnotesize
\begin{verbatim}
  outcome(o(X1,...,Xn)):- C, d1(X1), ..., dn(Xn).
\end{verbatim}}

The fundamental advantage of introducing conditional preference
theories as Datalog programs is that Datalog's rich semantic, algorithmic
and implementation framework is now available in service of
conditional preferences. The semantics of LCP theories is that of
(constrained) first-order logic theories. The framework is rich enough to express
CP-nets and each of its extensions discussed above, including
algorithms for consistency, dominance and optimality
(Section~\ref{sec:CP-extensions}). Using outcomes
rather than assertions of preference over individual features
permits the formalization of the semantics (e.g. {\em
ceteris paribus} or indifference) internally, as just a certain 
pattern of quantification over variables. 

Constraints fit in naturally and do not have to be introduced after the
fact in an {\em ad hoc} fashion as in
\cite{hard+soft+cpnets,cp+constraints,Prestwich04constrainedcpnets}.
For example, \cite{Prestwich04constrainedcpnets} provides a dominance
algorithm using the notion of a {\em consistent flipping worsening
  sequence}: they allow worsening flips only between consistent
outcomes. In our formulation the constraints are already built 
into basic definitions (constraints are additional goals in the body
of preference clauses, and in the body of the clause defining outcomes)
and no changes are necessary.

Additionally, recursive LCP-rules (rules with goals \code{gi} in the
body) offer a powerful new form of {\em dependent} conditional
preference statements (Section~\ref{sec:dependent}), particularly
useful in multi-agent contexts
\cite{rossi2004mcp,MaranInfluencedCPNets}. They support rules such as 
``If Alice prefers to drive to Oxford today, Bob will prefer to fly
 to Manchester tomorrow''.  

The rich complexity theory developed
for Datalog \cite{vardi,Feder:1999:CSM:298483.298498,Gottlob:2003:CSD:794095.794102,Dantsin:2001:CEP:502807.502810} applies {\em inter alia} to conditional
preference theories -- in particular we discuss the notion of
{\em data-complexity} in Section~\ref{bg:datalog-tabled}. General results
about (linear) Datalog programs lead to complexity bounds for
consistency, dominance and optimization extending current known
bounds, and in some cases, providing new, simpler proofs for existing
complexity bounds (Section~\ref{sec:alg}). Further, tabled
Prolog systems such as XSB Prolog\cite{Swift:2010:TAS:1888743.1888771,tabling-xsb} implement
constrained Datalog with sophisticated features such as partial order
answer subsumption that are directly usable in an implementation of
LCP. 

One of the reasons that CP-nets are popular in practice is that
useful special cases have been identified (acyclic nets,
tree-structured nets) which can be implemented efficiently. In
Section~\ref{sec:alg} we show how some of these special cases can be extended
to the richer language we consider. Further, we provide a compiler
for LCP theories that can recognize these special cases and generate
custom code for consistency, dominance and optimization
(Section~\ref{sec:implementation}). We present some scalability numbers.

In summary, we believe our formalization of extensions of CP-nets 
permits an integrated treatment of preferences in constraint (logic) 
programming, leading to more powerful reasoning systems which can deal
with both preferences and hard constraints. 

\paragraph{Rest of the paper.} Section \ref{sec:bg} introduces the
models of GCP-nets, CP-nets, CP-theories and comparative preference
languages, and Datalog and tabled logic programming, and provides basic computational results.
Section \ref{sec:logic} specifies the LCP formulation, and establishes
that LCP-theories conservatively extend GCP-nets, CP-nets, CP-theories
and comparative preference languages.  Section \ref{sec:alg} studies
the computational complexity of outcome optimization, consistency
checking and dominance queries. We also provide specific algorithms for the special cases
(e.g.{} acyclic structure and tree structure dependency graphs). 
Section \ref{sec:implementation} describes the implementation of the
LCP compiler. We conclude in Section \ref{sec:conclusion}.


\section{Background}\label{sec:bg}
%

Below we assume given a set of $N$ {\em features} (or variables), 
$\Var{} = \{X_1,\ldots,X_N\}$. We assume for simplicity that the values in each
feature $X$ are either $x, \bar{x}$ (handling multiple values is easy).

\subsection{GCP-nets and CP-nets}
GCP-nets (\cite{goldsmith2008computational} and  \cite{domshlak03reasoning}) 
allow  a general form of conditional and qualitative
preferences to be modeled compactly.  
\begin{definition}
A \textbf{Generalized CP-net} (\textbf{GCP-net}) $C$ over 
$\Var{}$ is a set of conditional preference rules. A
\textbf{conditional preference rule} is an expression $p : l >  \bar{l}$, where $l$ is
a literal of some atom $X \in \Var{}$ and $p$ is a propositional
formula over $\Var{}$ that does not involve variable $X$. 
A GCP-net corresponds to a directed graph (dependency graph) where
each node is associated with a feature
and the edges are pairs $(Y,X)$ where $Y$ appears in $p$ in
some rule $p:x > \bar{x}$ or $p: \bar{ x}> x$. Each node $X$ is
associated with a \emph{CP-table} which expresses the user preference
over the values of $X$. Each row of the CP-table corresponds to a
conditional preference rule. 
\end{definition}

The CP-tables of a GCP-net can be
\emph{incomplete} (i.e.{} for some values of some variables' parents,
the preferred value of $X$ may not be specified) and/or \emph{locally
  inconsistent} (i.e. for some values of some variables' parents, the
table may both contain the information $x> \bar{x}$ and
$\bar{x}>x$). CP-nets \cite{boutilier2004cp} are a special case of GCP-net in which the
preferences are locally consistent and locally complete. 


%

An \emph{outcome} in a CP-net is a complete assignment to all features.
For example, given  $\Var{}= \{X_1,X_2\}$ and  binary domains
$D_1=D_2=\{T,F\}$, all the possible outcomes are $TT$, $TF$, $FT$ and
$FF$. 

A {\em worsening flip} is a change in the value of a feature to a value which is less preferred 
according to the cp-statement for that feature. This concept defines
an order over the set of outcomes such that one outcome $o$ is {\it
  preferred} to another outcome $o'$ ($o \succ o'$) if and only if
there is a chain of worsening flips from $o$ to $o'$. The notion of
{\em worsening flip} induces a preorder over the set of outcomes. This
preorder allows maximal elements that correspond to the so-called
\emph{optimal outcomes}, which are outcomes that have no other outcome
better than them. 

Given any GCP-net and CP-net the problems of consistency checking 
and finding optimal outcomes are  PSPACE-complete
\cite{goldsmith2008computational}. Moreover, there
could be several different maximal elements. When the dependency graph
has no cycle the CP-net is called {\em acyclic}. The
optimal outcomes for such nets are unique and can be found in
polynomial time in $N$. The procedure used to this purpose is usually
called a {\em sweep forward} and takes $N$ steps \cite{boutilier2004cp}. 

The problem of dominance testing (i.e. determining if one outcome is preferred to another)
is PSPACE-complete for both GCP-nets and CP-nets. It is polynomial
if the CP-nets are tree structured or poly-tree structured 
\cite{domshlak2002cp,goldsmith2008computational}. 

\subsection{CP-theories and comparative preference languages}\label{wilson}

CP-theories are introduced in \cite{Wilson04extendingcp-nets}  as a logic of conditional preference 
which generalizes CP-nets.

\begin{definition}
Given a set of variables $\Var{}=\{X_1, \cdots , X_N\}$ with domains $D_i$, $i=1,\ldots,n$,
the language $L_{\Var{}}$ is defined by all the statements of the form: $ u: x_i \succ x'_i [W]$ where $u$ is an assignment of a set of variables $U \subseteq \Var{} \setminus \{X_i\}$, $x_i \not = x'_i \in D_i $ and $W$ is a set of variables such that $W \subseteq (\Var{} \setminus U  \setminus \{X_i\})$.
\end{definition}

\begin{definition}
Given a language $L_{\Var{}}$ as defined above, a \emph{conditional preference theory (CP-theory)} $\Gamma$ on $\Var{}$ is a subset of $L_{\Var{}}$. $\Gamma$ generates a set of preferences that corresponds to the set $\Gamma^*= \bigcup_{\varphi \in \Gamma} \varphi^*$ where given $\varphi=u: x_i \succ x'_i [W]$, $\varphi^*$ is defined as $\varphi^*=\{(tuxw,tux'w'): t \in \Var{} \setminus (\{X_i \cup U \cup W\}), ~ w, w' \in W\}$.
\end{definition}

A CP-net is a particular case of a CP-theory where $W= \emptyset$ for all $\varphi \in \Gamma$.

Two graphs are associated to a CP-theory:
$H(\Gamma)=\{(X_j, X_i)| \exists \varphi \in \Gamma \text{ s.t. }  \varphi=u: x_i \succ x'_i [W] \text{ and } X_j \in U\}$
and $G(\Gamma)=H \cup \{(X_i, X_j)| \exists \varphi \in \Gamma  \text{ s.t. } \varphi=u: x_i \succ x'_i [W] \text{ and } X_j \in W\}$.

The semantics of CP-theories depends on the notion of a
\emph{worsening swap}, which is a change in the assignment of a set of
variables to an assignment which is less preferred by a rule $\varphi
\in \Gamma$. We say that one outcome $o$ is better than another
outcome $o'$ ($o \succ o'$) if and only if there is a chain of
worsening swaps (a \emph{worsening swapping sequence}) from $o$ to
$o'$. 

\begin{definition}
A CP-theory $\Gamma$ is {\em locally consistent} if and only if  for all
$X_i \in \Var{}$ and $u \in Pa(X_i)$ in the graph $H(\Gamma)$, $\succ_{u}^{X_i}$ is irreflexive. 
\end{definition}

Local consistency can be determined in time proportional to
$|\Gamma|^2N$. Given a CP-theory $\Gamma$, if the graph
$G(\Gamma)$ is acyclic, $\Gamma$ is consistent if and only if
$\Gamma$ is locally consistent, thus global consistency has the same
complexity as local consistency given an acyclic graph $G(\Gamma)$. 


Comparative preference theories \cite{wilson09} are an extension of CP-theories. 
\begin{definition}
The comparative preference language ${\cal CL}_{\Var{}}$ is defined by
all statements of the form: $ p>q ||T$ where $P$, $Q$ and $T$ are
subsets of $\Var{}$ and $p$ and $q$ are assignments respectively of
the variables in $P$ and in $Q$. 
\end{definition}
\begin{definition}
Given a language ${\cal CL}_{\Var{}}$ as defined above, a
\emph{comparative preference theory} $\Lambda$ on $\Var{}$ is a subset
of ${\cal CL}_{\Var{}}$. $\Lambda$ generates a set of preferences that
corresponds to the set $\Lambda^*= \bigcup_{\varphi \in \Lambda}
\varphi^*$ where if $\varphi= p>q ||T$, $\varphi^*$ is defined as a
pair $(\alpha,\beta)$ of outcomes such that $\alpha$ extends $p$ and
$\beta$ extends $q$ and $\alpha$ and $\beta$ agree on $T$: $\alpha
\restriction _{T}=\beta \restriction _{T}$. 
\end{definition}

\subsection{Datalog and tabled logic programming}\label{bg:datalog-tabled}
A Datalog program consists of a collection of definite clauses in a
language with no function symbols, hence a finite Herbrand
domain. Datalog programs can be implemented using {\em tabled Logic
  Programming (TLP)}. Tabling 
maintains a memo table of subgoals  produced in a query
evaluation and their answers. If a subgoal is reached again then the
information in the table can be reused, without recomputing the
subgoal. This method ensures termination and improves the
computational complexity for a large class of problems
\cite{tabling-xsb} (at the expense of additional space
consumption). Answer subsumption extends the functionality of tabling.
{\em Answer variance} adds a new answer to a table only if the new
answer is not a variant of any other answer already in the table. {\em
  Partial order answer subsumption} adds a new answer to a table only 
if the new answer is maximal with respect to the answers in the table,
given a partial order \code{po/2}:

{\footnotesize \begin{verbatim}
:-  table predicate(_,_,partialOrder(po/2)).
\end{verbatim}
}

Traditionally, predicates are divided into {\em extensional}
and {\em intensional} predicates. The extensional predicates define a
database, and intensional predicates define (possibly recursively
defined) queries over the database. In our context, \code{d/2} and \code{outcome/1} will be
considered extensional predicates (in Flat LCP) and other predicates
such as \code{dom/2, inconsistent} and user-defined predicates are
considered intensional. Given an intensional program $P$, database $D$
and query $q$, {\em data-complexity} \cite{vardi} is the complexity of answering
$P,D\vdash q$ as a function of the size of $D$ and $q$ (thus the
program is considered fixed). {\em Combined complexity} is the
complexity of answering 
$P,D\vdash q$ as a function of the size of $P$, $D$ and $q$ (thus
nothing is taken to be fixed). 
The basic results are that for data-complexity, general Datalog
programs are {\sc PTIME}-complete, and linear programs are {\sc NLOGSPACE}-complete
\cite{Gottlob:2003:CSD:794095.794102}. For combined complexity,
general Datalog programs are {\sc EXPTIME}-complete, and linear programs are
{\sc PSPACE}-complete. 


\section{Logical Conditional Preference Theories}\label{sec:logic}
We assume given a set of $N$ features, and a logical vocabulary $\cal V$ with
unary predicates \code{d1}, \ldots, \code{dN} (corresponding to
the domains of the features), constants for every  
value in the domains \code{di}, a single function symbol 
\code{o/N}, and a single binary predicate \code{d/2}.  

The user specifies preferences between two outcomes 
\code{S,T} (expressed as \code{o/N} terms)
by supplying  clauses for the atom
\code{d(S,T)}:
{\footnotesize\begin{verbatim}
d(S,T) :- C, g1, ..., gk.
\end{verbatim}}
\noindent where \code{C} is a constraint (possibly involving
equality), and \code{g1}, \ldots, \code{gk} are possibly 
recursively defined predicates involving \code{d/2}. The  clause is
said to be {\em flat} if \code{k=0}, else it is {\em recursive}. The
user specifies hard constraints on features by providing clauses for the
\code{outcome/1} predicate, typically of the form
{\footnotesize
\begin{verbatim}
outcome(o(X1,...,Xn)) :- C, d1(X1), ..., dn(Xn).
\end{verbatim}
}
\noindent where \code{C} is a constraint and the \code{di} are domain predicates.

The LCP runtime supplies the following definition for the \code{dom/2}
predicate, expressing (tabled) transitive closure over \code{d/2}, and
for consistency and optimal outcomes: 

{\footnotesize
\begin{verbatim}
:- table(dom(_,_)).
dom(X,Y):- d(X,Y), outcome(X), outcome(Y).
dom(X,Y):- d(X,Z), dom(Z,Y).
consistent :- \+ dom(X,X).
:- table(optimal(po(dom/2))).
optimal(X):- outcome(X).
\end{verbatim}
}
\noindent Note that the clauses above are linear. Below, given an LCP
theory (Datalog program) $P$, by ${\cal L}(P)$ we will mean $P$
together with the LCP runtime clauses specified above.

Given these definitions, the problem solver may use
\code{consistent} to determine whether the supplied preference
clauses are consistent, \code{dom(S,T)} to determine whether
outcome \code{S} is preferred to \code{T}, and 
\code{optimal(S)} (where \code{S} may be a partially
instantiated \code{o/N} structure) to determine an optimal
completion of \code{S}.

\begin{example}[Dinner, modified from \cite{boutilier2004cp}]\label{ex:Dinner}
Two components of a meal are the soup (\code{fish} or \code{veg})
and wine (\code{white} or \code{red}). I prefer \code{fish} to
\code{veg}. If I am having \code{fish}, I prefer \code{white} wine to
\code{red}. I simply do not want to consider \code{veg} with 
\code{red}. This may be formulated as the LCP theory:
{\footnotesize
\begin{verbatim}
soup(fish). soup(veg). wine(white). wine(red).
outcome(o(X,Y)):- soup(X), wine(Y), (X\== veg; Y\==red).
d(o(fish, X),    o(veg,  X)).
d(o(fish, white),o(fish, red)).
\end{verbatim}}
\noindent On this theory, the query \code{?-consistent.} returns
\code{yes}. The \code{dom/2} predicates order outcomes as:
{\footnotesize
\begin{verbatim}
o(fish,white) > o(fish,red)   o(fish,white) >  o(veg,white)
\end{verbatim}}
\noindent Note because of hard constraints the outcomes are not
totally ordered. The query \code{optimal(X)} returns the single answer
\code{X=o(fish,white)}; the query \code{optimal(o(fish,X))} returns \code{X=white}; 
\code{optimal(o(X,red))} returns \code{X=fish}, etc. The behavior of
the \code{optimal/1} queries will be explained later; for now observe
that an \code{optimal(O)} query returns that (in this case, unique)
instantiation of \code{O} which is highest in the \code{dom/2} order.
\end{example}

\begin{example}[Holiday Planning, \cite{Wilson04extendingcp-nets}]\label{ex:HolidayPlanning}
  There are three features: \code{time}, with values \code{l} and
  \code{n} for later and now; \code{place}, with
  values \code{m} and \code{o} for Manchester and Oxford, and \code{mode}
  with values \code{f} and \code{d} for fly and drive. 
  The preference ``All else being equal, I would prefer to go to
  Manchester'' is formulated as clause 1 below.  The rule ``I would
  prefer to fly rather than drive, unless I go later in the year to
  Manchester, where the weather will be warmer, and a car would be
  useful for touring around'' translates to clauses 2-3.
  The CP-theory rule ``I would prefer to go next week, regardless of other
    choices.'' corresponds translated to clause 4, and the comparative
    preference rule ``All other things being equal, I would prefer to
    fly now, rather than to drive later.'' to clause 5:

{\footnotesize
\begin{verbatim}
time(n). time(l). place(o). place(m). mode(f). mode(d).
outcome(o(T,P,M)):- time(T), place(P), mode(M).
/*1*/ d(o(X,m,Y),o(X,o,Y)).
/*2*/ d(o(T,P,f),o(T,P,d)):- T=n;P=o.
/*3*/ d(o(l,m,d),o(l,m,f)).
/*4*/ d(o(n,_,_),o(l,_,_)).
/*5*/ d(o(n,X,f),o(l,X,d)).
\end{verbatim}}
\end{example}

  
\begin{proposition}[Normal form for Flat LCP rules]
Let $R$ be a flat LCP-rule \code{d(o(}$X_1, \ldots, X_n$\code{), o(}$Y_1, \ldots ,Y_n$\code{)) :- c.}
where \code{c} is an equality constraint. For appropriate choices of
disjoint index sets $J$, $K$, $M$ and $Z$ s.t.  
  $\{1, \ldots, n\}=J \cup K \cup M \cup Z$, and given $L$ and  $U$ disjoint subsets of $M$  and given constants 
  $v_j (j \in J)$, $a_m,a'_m (m\in M, a_m \not= a'_m)$, $a_l (l\in L)$ and $a_u (u\in U)$,
  $R$ is logically equivalent to the clause 
\code{d(o(}$S_1, \ldots, S_n$\code{), o(}$T_1, \ldots ,T_n$\code{)).}
where  $S_i,T_i$ are defined by:  $S_j=T_j= v_j (j \in J)$,
 $S_k=T_k, k \in K$,  $S_m=a_m, T_m=a'_m (m \in M)$, 
$S_z=X_z,T_z=Y_z (z \in Z)$,
$S_l=X_l, T_l = a_l (l \in L)$,
$S_u=a_u, T_u= Y_u (u \in U)$.

The set $J$ corresponds to the parent variables, the set $K$ to the
ceteris paribus variables, the set $M$ to the variables that change
the value from $S$ to $T$ and $Z$ to the variables that are less
important than the variables in $M$. We call $L$ the {\em lower-bound}
set, and $U$ the {\em upper-bound} set. 
\end{proposition}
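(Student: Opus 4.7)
The plan is to analyze the equality constraint \code{c} via the equivalence relation $\sim$ it induces on $\{X_1, \ldots, X_n, Y_1, \ldots, Y_n\}$ together with the constants occurring in \code{c}. First I would observe that if some equivalence class contains two distinct constants, then \code{c} is unsatisfiable and the rule has no ground instances, so the normal form is vacuously equivalent. Otherwise each class contains at most one constant, and a ground substitution $\sigma$ satisfies \code{c} precisely when $\sigma$ sends every variable in a class containing a constant $a$ to $a$, and assigns the same ground value to all variables within each constant-free class.

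Next I would assign each position $i \in \{1, \ldots, n\}$ to exactly one of the six index sets by case analysis on the pair $([X_i], [Y_i])$. If the two classes coincide and contain a constant $v$, put $i \in J$ with $v_j := v$. If they coincide but contain no constant, put $i \in K$ (the \emph{ceteris paribus} case). If the classes differ and both contain constants $a, a'$, put $i \in M$; the distinctness $a_m \neq a'_m$ follows from transitivity of $\sim$, since two equal constants in different classes would have been merged. If only $[Y_i]$ contains a constant, put $i \in L$ with $a_l$ that constant; symmetrically, if only $[X_i]$ does, put $i \in U$. Finally, if neither class contains a constant and the classes are distinct, put $i \in Z$. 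These six cases are exhaustive and mutually exclusive, yielding the desired partition.

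Finally I would verify logical equivalence of the two clauses by matching their sets of ground instances. Every ground substitution satisfying \code{c} must respect $\sim$ as described above, and under such a substitution the head of the original rule produces exactly the same ground atom as the normal-form head under the corresponding substitution of its remaining variables; the converse direction is symmetric. Cross-position variable sharing (when several positions share the same constant-free class) is preserved by consistently reusing the same name among the relevant $X_k$, $Y_k$, $X_z$, and $Y_z$ slots of the normal form. The main (and rather mild) obstacle is the bookkeeping for this shared-variable renaming together with the distinctness argument in case $M$; once the equivalence classes of $\sim$ have been computed, the rest of the proof is a routine substitution argument.
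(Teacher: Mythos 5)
The paper states this proposition without any proof (the authors only remark that such proofs are ``straightforward''), so there is no official argument to compare against; your route via the equivalence relation induced by \code{c} --- equivalently, applying the most general unifier of \code{c} to the head and discarding the body --- is the natural and essentially correct one. Your six-way case analysis on the pair $([X_i],[Y_i])$ is exhaustive, correctly sorts each position into $J$, $K$, $M\setminus(L\cup U)$, $L$, $U$ or $Z$, and correctly derives $a_m\neq a'_m$ from the fact that a given constant can lie in at most one class. Two caveats. First, your treatment of an unsatisfiable \code{c} is wrong as stated: a bodyless clause is never vacuous (it always has ground instances over nonempty domains), so a rule with unsatisfiable body is \emph{not} logically equivalent to any instance of the normal form. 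That degenerate case must simply be excluded by hypothesis, not declared ``vacuously equivalent.''

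Second, the cross-position variable sharing you mention in passing is not mere bookkeeping --- it is precisely the point where the proposition, read literally, fails. A constraint such as $X_1=X_2$ or $X_1=Y_2$ forces two distinct positions (or an $S$-slot and a $T$-slot) to carry the \emph{same} variable, whereas the stated normal form gives each position in $K$, $Z$, $L$, $U$ its own fresh variable(s); no choice of index sets and constants reproduces, say, \code{d(o(V,X2),o(Y1,V))}. Your fix --- consistently reusing names across the affected slots --- does produce a logically equivalent fact, but that fact lies outside the normal form as written. So what you actually prove is a (correct) amended version of the proposition in which the normal form is permitted to identify variables across positions. You should flag this explicitly as a needed correction to the statement rather than folding it silently into the ``routine substitution argument''; otherwise the proof reads as establishing something the stated normal form cannot express.
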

  
\begin{example}
Given three variables \code{main}, \code{drink} and \code{dessert}
with domains \code{\{meat,} \code{fish,veg\}}, \code{\{water,wine\}} and
\code{\{cake,fruit\}} respectively.  The rule ``If I eat cake as dessert I prefer to drink
  water and I prefer to not eat meat'', gives \code{Z,U,K=$\emptyset$},
  \code{J=\{dessert\}},  \code{M=\{main, drink\}}, and \code{L=\{A\}}
  (clause 1). The rule ``Given the same dessert, I always prefer fish
  as main course, regardless of the drink.'', gives 
  \code{J,L=$\emptyset$} \code{Z=\{drink\}}, \code{K=\{dessert\}}, and
  \code{M=U=\{main\}} (clause 2):
{\footnotesize\begin{verbatim}
/*1*/ d(o(X1, water, cake), o(meat, wine, cake)).
/*2*/ d(o(fish, X2, X3),o(Y1, Y2, X3)).
\end{verbatim}}
\end{example}

A GCP-net rule is simply a Flat LCP-rule such that
$Z=\emptyset$, $|M|=1$, $L\cup U=\emptyset$. A CP-theory
rule is a Flat LCP-rule with $|M|=1$, 
$L \cup U=\emptyset$. A comparative preference rule is
a Flat LCP-rule with $L\cup U=\emptyset$.

The following theorems establish that LCP-theories conservatively
extend these sub-languages. Proofs are straightforward, we have
essentially just used standard logical notions to formalize the sub-languages:
\begin{theorem}[Logical characterization of {\em ceteris paribus and general ceteris paribus}]\label{th:CP-GCP}
Given a CP-net $\cal R$, consider 
the set $P$ of flat LCP-rules  which represent all rows of its CP-tables.
Then, for any two outcomes $s$ and $t$, $s \succ
t$ in $\cal R$ iff 
${\cal L}(P) \vdash \mbox{\code{dom(s,t)}}$.
\end{theorem}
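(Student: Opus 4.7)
The plan is to prove the two directions separately, and in each direction reduce to a single-step correspondence between a CP-net worsening flip and an atomic derivation of \code{d(s,t)} in $\mathcal{L}(P)$; the statement then follows by induction on the length of the chain since $\succ$ is the transitive closure of worsening flips on the CP-net side, and \code{dom/2} is defined in $\mathcal{L}(P)$ as the tabled transitive closure of \code{d/2} (guarded by \code{outcome/1}).

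First I would make the single-step lemma precise: for two outcomes $s,t$ differing on exactly one feature $X$, the flip from $s$ to $t$ is a worsening flip in $\mathcal{R}$ if and only if $\mathcal{L}(P)\vdash \code{d(s,t)}$. For the ``if'' part, note that by the Normal Form proposition a CP-table row $p: x > \bar{x}$ for variable $X$ is encoded as a flat LCP clause of the form prescribed with $J$ indexing the parents of $X$ (fixed to the constants in $p$), $M=\{X\}$ with $a_X=x,a'_X=\bar{x}$, $K$ consisting of all remaining variables (appearing with shared logic variables on both sides), and $Z=L=U=\emptyset$. So every ground instance \code{d(s,t)} derivable from $P$ fixes the parent values of some $X$ to a specific $p$, changes $X$ from $x$ to $\bar{x}$, and forces every other feature to agree on both sides — exactly the definition of a worsening flip justified by the row $p:x>\bar{x}$. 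The converse is immediate: a worsening flip for $X$ justified by row $p:x>\bar{x}$ unifies with the corresponding clause's head.

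Having the single-step lemma, the forward direction follows by walking along any worsening flip chain $s=o_0,o_1,\dots,o_k=t$: each $(o_i,o_{i+1})$ yields \code{d}$(o_i,o_{i+1})$, and since every $o_i$ is an outcome the two \code{dom/2} clauses chain these into \code{dom(s,t)}. For the reverse direction I unfold any successful SLD/tabled derivation of \code{dom(s,t)}: the base clause gives a single \code{d/2} step between outcomes, and the recursive clause produces an intermediate term $Z$; by a straightforward induction on proof depth one extracts a sequence of outcomes linked by \code{d/2} facts, each of which is a worsening flip by the single-step lemma. (Technically the recursive clause does not require the intermediate $Z$ to be an outcome, but unification with the heads of $P$'s clauses forces every such $Z$ to be a fully ground \code{o/N} term over the domain constants, i.e.\ an outcome; this is the one bookkeeping point worth flagging.)

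The main obstacle is really just the bookkeeping in the single-step lemma: one must verify that the encoding preserves the intended semantics on both sides — that \emph{only} worsening flips arise as \code{d/2} facts from the translated rules, and \emph{every} worsening flip arises this way — which requires checking that the constraint \code{C} in the normal form captures the parent-fixing and ceteris-paribus conditions exactly, and that the LCP runtime's \code{outcome/1} guard in the \code{dom/2} base clause rules out spurious facts involving partially instantiated terms. Once this correspondence is established, the transitive closure extension is routine.
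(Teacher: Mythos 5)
Your proof is correct and follows exactly the route the paper intends: the paper gives no explicit argument for this theorem (it states only that the proofs are ``straightforward,'' the formalization having been set up so that a CP-table row becomes a flat clause with $M$ the flipped feature, $J$ the parents, $K$ the remaining features, and $Z=L=U=\emptyset$), and your single-step lemma matching worsening flips with derivable \code{d/2} facts, followed by identifying the two transitive closures, is precisely that straightforward argument. The bookkeeping point you flag about intermediate terms in the recursive \code{dom/2} clause not being explicitly guarded by \code{outcome/1} is a genuine subtlety the paper glosses over, and your resolution (for a CP-net every ground \code{o/N} term over domain constants is an outcome, and the translated clause heads force the intermediate terms to be such ground terms) is sound.
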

\begin{theorem}[Logical characterization of {\em CP-theories} and {\em comparative preference languages}]\label{th:comparative-CP-TH}
Given a CP-theory ${\cal R}$, consider the set $P$ of flat LCP rules modeling
all the rules of the CP-theory.
Given two outcomes $s$ and $t$,
${\cal R}\vdash s \succ t$ iff  ${\cal L}(P) \vdash \mbox{\code{dom(s,t)}}$.
\end{theorem}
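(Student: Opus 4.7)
The plan is to reduce everything to a single-step correspondence between each CP-theory rule $\varphi$ and its flat-LCP translation $R(\varphi)$, and then lift by induction on chain length to the transitive closure. For the translation, given $\varphi = u : x_i \succ x'_i [W]$ in $\Gamma$ with $u$ an assignment on a set $A \subseteq \Var{} \setminus \{X_i\}$ (I use $A$ in place of the CP-theory's parent set $U$ to avoid clashing with the upper-bound index set $U$ of the preceding normal form), I would instantiate the normal form by taking $J$ to be the indices of $A$ (with $v_j$ the value $u$ assigns to $X_j$), $M = \{i\}$ with $a_i = x_i$, $a'_i = x'_i$, $Z$ the indices of $W$, and $K$ the remaining indices $\Var{} \setminus (A \cup W \cup \{X_i\})$, with $L = U = \emptyset$. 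Since $|M| = 1$ and $L \cup U = \emptyset$, this is exactly the CP-theory shape already singled out in the paper.

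Next I would establish the single-step correspondence: for outcomes $s, t$, the pair $(s, t)$ lies in $\varphi^*$ iff ${\cal L}(P)$ derives \code{d}$(s, t)$ via the clause $R(\varphi)$. Matching the definition $\varphi^* = \{(tuxw, tux'w')\}$ clause-by-clause against the normal form gives: agreement on the $t$-variables matches $S_k = T_k$ for $k \in K$, the assignment $u$ on $A$ matches $S_j = T_j = v_j$ for $j \in J$, the switch of $X_i$ from $x_i$ to $x'_i$ matches the $M$-clause, and the independent free ranging of $W$-values on the two sides matches $S_z = X_z$, $T_z = Y_z$ with \emph{distinct} logical variables $X_z, Y_z$. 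A derivation of \code{d}$(s, t)$ then yields \code{dom}$(s, t)$ via the base clause of \code{dom/2}, using that every complete assignment is an outcome in the unconstrained setting (the default \code{outcome/1} clause admits everything).

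Finally I would lift the correspondence. By definition, ${\cal R} \vdash s \succ t$ iff there is a finite worsening swapping sequence $s = o_0, o_1, \ldots, o_k = t$ with each $(o_{j-1}, o_j) \in \varphi_j^*$ for some $\varphi_j \in \Gamma$. The single-step claim supplies a derivation of \code{d}$(o_{j-1}, o_j)$ in ${\cal L}(P)$ at each step, and the recursive clause of \code{dom/2} chains these into \code{dom}$(s, t)$. Conversely, by induction on the length of the \code{d/2}-chain underlying a derivation of \code{dom}$(s, t)$, each link pulls back to a worsening swap under the corresponding $\varphi_j$.

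The main obstacle is verifying that the single-step correspondence is \emph{tight} in both directions --- that $R(\varphi)$ generates exactly $\varphi^*$, neither more nor less. This is bookkeeping over the index partition $J \cup K \cup M \cup Z$ against the CP-theory partition $A \cup t \cup \{X_i\} \cup W$, and the principal subtlety is insisting that the $Z$-variables be quantified \emph{independently and universally} on the two sides, rather than shared as in $K$; conflating these two roles is the natural way to get the proof wrong. The comparative-preference half of the theorem (per the title) is handled by the same template, now permitting $|M| > 1$ while still keeping $L \cup U = \emptyset$, with the translation mapping the variable set of $p$ (equivalently $q$) onto $M$, the $T$-variables onto $K$, and the unmentioned variables onto $Z$.
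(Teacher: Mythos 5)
Your proposal is correct and follows exactly the argument the paper has in mind: the paper itself offers no written proof beyond the remark that it is ``straightforward,'' and the intended argument is precisely your two-layer reduction --- a single-step correspondence between each rule $\varphi$ and its normal-form translation (with the $W$-variables landing in $Z$ as independently quantified variables, which is indeed the one point where care is needed), lifted by induction from worsening swapping sequences to the transitive closure \code{dom/2}. The only caveat worth noting is that your comparative-preference remark, like the paper's own claim that such rules have $L\cup U=\emptyset$, implicitly assumes the sets $P$ and $Q$ in $p>q\,\|T$ coincide (or are absorbed into $T$); when $P\neq Q$ the translation genuinely needs the $L$ and $U$ index sets.
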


\label{sec:CP-extensions}

\subsection{Recursive LCP-theories}\label{sec:dependent}

Recursive or dependent rules are particularly useful in multi-agent
contexts, where different agents may influence each other 
by stating their preferences depending on the preferences of some other agent
\cite{MaranInfluencedCPNets}.  Here we illustrate with an extension to
the Holiday planning example: 

\begin{example}[Holiday planning]
John and Mary work for the same office and need to travel separately.
``If John prefers Oxford to Manchester (all other things being equal),
then Mary prefers Manchester to Oxford (all other things being equal).''
{\footnotesize\begin{verbatim}
jPlace(X,Y) :- dom(o(J1, X, J3, M1, M2, M3), o(J1, Y, J3, M1, M2, M3)).
d(o(J1, J2, J3, M1, m, M3), o(J1, J2, J3, M1, o, M3)).
\end{verbatim}}

The predicate \code{jPlace(X,Y)} may be read as ``John prefers place
\code{X} to place \code{Y}, all other things being equal.'' The second clause
may be read as saying ``Mary prefers \code{m} to  \code{o}, all
other things being equal, provided that \code{jPlace(o,m)} holds.
\end{example}

\begin{example}
Let us consider two agents ranking features ``appetizer'' (rolls or
bread), ``main dish'' (pasta or fish) and dessert (tiramisu or
bread-pudding). We can formulate ``If Alice doesn't prefer pasta, I
would like to take pasta'' as:
{\footnotesize\begin{verbatim}
d(o(AA, AM, AD, MA, pasta, MD), o(AA, AM, AD, MA, fish, MD)) :- 
   dom(o(_, fish,_,_,_,_),o(_, pasta,_,_,_,_)).
\end{verbatim}}
\end{example}
Note we do not assume acyclicity in variable ordering.

%
%
%
%
%
%

\section{Algorithmic properties}\label{sec:alg}
The main algorithmic tasks regarding a preference theory are 
{\em dominance queries}, {\em consistency checking}, and {\em outcome
optimization} (also of interest are {\em ordering queries}, but we
  rule them out of scope because of limitations of space). Below we fix
  a set of features $\Var$ with cardinality $N$ and an LCP-theory $P$
  over $\Var$. 

Checking the dominance over a pair of outcomes corresponds to finding a
swapping sequence in CP-theories or a flipping sequence in
CP-nets. For LCP-theories this is determined by first-order
derivability: the dominance query \code{(s,t)} succeeds iff ${\cal
  L}(P)\vdash \mbox{\code{dom(s,t)}}$. 

The problem of consistency checking is checking whether there is any
outcome \code{s} such that ${\cal L}(P)\vdash \mbox{\code{dom(s,s)}}$.

The problem of outcome optimization corresponds to find the most
preferred outcome given an assignment to a (possibly empty) subset  of
features.

\begin{definition}[Optimal outcome]
An \emph{optimal outcome} is an outcome \code{s} such that 
there is no other outcome \code{t} such that ${\cal L}(P)\vdash \mbox{\code{dom(s,t)}}$ (i.e.{}
\code{s} is an undominated outcome).  
\end{definition}
\begin{definition}[Optimal completion]
Given a (possibly non-ground) term \code{s} (representing a partial outcome)
an \emph{optimal completion} of \code{s} is a ground term \code{t}
instantiating \code{s} s.t. for no other ground term \code{t1}
instantiating \code{s} is it the case that 
${\cal L}(P)\vdash \mbox{\code{dom(t1,t)}}$.
\end{definition}
Note that an acyclic CP-net and CP-theory has
a unique optimal outcome, but an LCP-theory may have several optimal
outcomes. 


Our algorithmic approach is based on analyzing whether the input
LCP-theory corresponds to a special case (e.g. acyclic CP-net,
tree-structured dependency graph, etc.). If so, optimal algorithms for
the special case are used. Otherwise general Datalog procedures are
used. 

In the following sections we analyze the algorithms for dominance,
consistency and optimality from a general point of view, and then
considering the special cases in which the algorithms are faster.
We take the viewpoint of {\em combined complexity}, i.e.{}
assuming $N$, the clauses of the LPC theory, and the given query are
supplied at runtime.

The results are summarized in Table \ref{table:complexity1}. 
\begin{table}

\begin{center}
\begin{scriptsize}
\begin{tabular}{ccccc}
&&\cellcolor{gray!40}General structure &\cellcolor{gray!40}Acyclic &\cellcolor{gray!40}Tree\\
\hline
{\bf Dominance}&&&&\\
\multirow{2}{*}{CP-nets:} && PSPACE-comp & PSPACE-comp & Polynomial \\
 &&\cite{goldsmith2008computational} &  \cite{goldsmith2008computational} &  \cite{boutilier2004cp,francesi} \\
\multirow{2}{*}{CP-theories:}  && PSPACE-comp & PSPACE-comp & {\bf Polynomial} \\
 &&\cite{Wilson04extendingcp-nets} &  \cite{Wilson04extendingcp-nets} &   \\
Flat LCP-theories: && {\bf PSPACE-comp} &{\bf PSPACE-comp} & {\bf  ? }\\
Recursive LCP-theories: && {\bf EXPTIME-comp} & {\bf PSPACE-comp}& {\bf  ?}\\
\hline
{\bf Consistency/Optimality} &&&&\\
\multirow{2}{*}{CP-nets:} && PSPACE-complete & Polynomial & Polynomial  \\
 &&\cite{goldsmith2008computational} &  \cite{boutilier2004cp} &  \cite{boutilier2004cp} \\
\multirow{2}{*}{CP-theories:} && PSPACE-complete & Polynomial & Polynomial  \\
 &&\cite{Wilson04extendingcp-nets} &  \cite{Wilson04extendingcp-nets} &  \cite{Wilson04extendingcp-nets} \\
Flat LCP-theories: && {\bf PSPACE-complete/?} & {\bf Polynomial*} & {\bf Polynomial *}  \\
Recursive LCP-theories: && {\bf EXPTIME-complete} & {\bf PSPACE-complete }& {\bf ? } \\
\hline
\end{tabular}
\end{scriptsize}
{\em Bold results are provided in this paper, *with some constraint on
  the form of the rule, ? means the corresponding problem is open.}
\caption{Computational complexity of Dominance, Consistency and
  Optimality}

\label{table:complexity1}
\label{table:complexity2}
\end{center}
\end{table}

It is important to notice that for Recursive LCP-theories optimality
and consistency procedures never have a lower computational complexity
then the dominance procedure, because a recursive LCP rule also
contains a dominance query. 

We note in passing that for Flat LCP theories, data-complexity is also
of interest. Recall that data-complexity for a Datalog programs is the
complexity of determining, for a fixed program $P$, and input database $D$
and query $q$, whether $P,D \vdash q$ (as a function of the size of
$D$ and $q$).  

What is the distinction between $P$ and $D$ for LCP theories? For Flat
LCP theories, $P$ is simply the clauses for \code{dom/2},
and \code{consistent/0}. Once $N$, the number of
features is fixed, this program is fixed. 
Thus data complexity for consistency of LCP theories corresponds to
the complexity of determining for {\em fixed $N$}, 
whether $P, D \vdash \mbox{\code{inconsistent}}$, as a function of the
number of rules in the program. For Flat LCP (=linear Datalog) the data-complexity is {\sf
  NLogSpace}-complete (see e.g.{} \cite{Gottlob:2003:CSD:794095.794102}).

\subsection{Dominance}
\begin{theorem}\label{dominance-flat}
  Given a flat LCP theory $P$ over $N$ features, deciding
  \code{dom(s,t)} is {\sc PSPACE}-complete in $N$.
\end{theorem}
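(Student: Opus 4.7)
The plan is to prove the two directions separately, each by a direct appeal to results already in place. For the lower bound, I would invoke Theorem~\ref{th:CP-GCP}: every CP-net $\mathcal{R}$ on $N$ binary features translates into a flat LCP-theory $P$ whose size is linear in the combined size of $\mathcal{R}$'s CP-tables, and such that for all outcomes $s,t$, $s \succ t$ in $\mathcal{R}$ iff $\mathcal{L}(P)\vdash\mbox{\code{dom(s,t)}}$. Since dominance testing for CP-nets is known to be \textsc{PSPACE}-hard in $N$ \cite{goldsmith2008computational}, this gives \textsc{PSPACE}-hardness of the flat LCP dominance problem in $N$ for free.

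For the upper bound, I would argue that $\mathcal{L}(P)$ is (after the standard flattening of the function symbol \code{o/N} into the argument lists, as discussed in the footnote of the introduction) a \emph{linear} Datalog program. The flat LCP clauses defining \code{d} contain only constraints and unary domain predicates in their bodies, hence no intensional atom at all; the \code{outcome/1} clauses are similarly extensional; and the two runtime clauses for \code{dom/2} have exactly zero and exactly one occurrence of an intensional predicate in the body, respectively. Therefore the whole program is linear. By the combined-complexity result for linear Datalog recalled in Section~\ref{bg:datalog-tabled} \cite{Gottlob:2003:CSD:794095.794102}, deciding $\mathcal{L}(P)\vdash\mbox{\code{dom(s,t)}}$ is in \textsc{PSPACE} in the total input size $|P|+|s|+|t|$, which is polynomial in $N$ together with the number of flat rules, matching the bound claimed in the theorem.

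The main subtlety I expect to have to address is an encoding issue: the Herbrand universe of outcomes has size exponential in $N$, and one might worry that this affects the invocation of the linear-Datalog combined-complexity bound. The key point is that the linear-Datalog \textsc{PSPACE} result concerns proof-tree search governed by program size, not Herbrand-universe size, so it applies uniformly here; the algorithm only ever manipulates ground tuples of length $2N$, each of size polynomial in $N$. A second minor point to check is that the constraints \code{C} permitted in clause bodies (equalities, disequalities, domain membership) are themselves decidable in polynomial time, so that goal resolution at each step stays within the \textsc{PSPACE} budget; this is immediate from the syntactic restrictions on flat LCP rules.
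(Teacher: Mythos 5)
Your proposal is correct and follows essentially the same route as the paper: hardness by embedding (G)CP-net dominance via the logical-characterization theorems together with the {\sc PSPACE}-hardness result of Goldsmith et al., and membership by exploiting the linearity of the \code{dom/2} clauses and the {\sc PSPACE} combined-complexity bound for linear Datalog from Gottlob and Papadimitriou. The only difference is presentational: the paper re-derives that bound inline as a nondeterministic polynomial-space search keeping two ground \code{dom} facts at a time, whereas you cite it as a black box and then verify (correctly) that the non-ground constrained extensional clauses and the exponential Herbrand universe do not invalidate the citation.
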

\begin{proof}
  Since flat LCP theories can encode the GCP-nets of
  \cite{goldsmith2008computational}, the dominance problem is at
  least PSPACE-hard. That the problem is in PSPACE can be established
  in a form similar to the proof of Theorem 4.4 in
  \cite{Gottlob:2003:CSD:794095.794102}.  Since $P$ has but a single rule, and
  the rule is linear, we can build up the proof non-deterministically
  using a polynomial amount of space. In fact, we need to keep space
  only for two ground facts of the form \code{dom(s, t)}
  where the \code{s,t} are constants. We start by using the base
  clause for \texttt{dom/2} to non-deterministically establish a 
  \texttt{dom/2} fact, using some fact for \code{d/2}, and scratch
  space linear in $N$.  Then we use the recursive clause for
  \code{dom/2} to non-deterministically generate a new
  \code{dom/2} fact. From this new fact, we can generate another, and
  delete the old fact. We stop when \code{dom(s,t)} is established.  
\end{proof}
\begin{theorem}\label{dominance-recursive-lcp}
  Given a recursive LCP theory $P$ over $N$ features, deciding
  \code{dom(s,t)} is {\sc EXPTIME}-complete in $N$.
\end{theorem}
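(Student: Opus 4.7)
The plan is to bracket the problem between the two combined-complexity bounds for Datalog recalled in Section~\ref{bg:datalog-tabled}: general Datalog programs are EXPTIME-complete, while linear Datalog programs are PSPACE-complete. The upper bound is almost immediate: the program ${\cal L}(P)$ is a constrained Datalog program whose size is polynomial in $N$ and $|P|$, so by combined-complexity EXPTIME-membership of Datalog query evaluation~\cite{Dantsin:2001:CEP:502807.502810,Gottlob:2003:CSD:794095.794102}, deciding whether ${\cal L}(P) \vdash \mbox{\code{dom(s,t)}}$ can be done in time exponential in $N$ and $|P|$.

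For the lower bound I would reduce from the combined-complexity EXPTIME-hardness of Datalog evaluation. Given an arbitrary Datalog program $Q$ of size $n$ whose predicates have maximum arity $r$, together with a ground query goal $q$, the aim is to build, in polynomial time, a recursive LCP theory $P$ and outcomes $s,t$ such that $Q \vdash q$ iff ${\cal L}(P) \vdash \mbox{\code{dom(s,t)}}$. The encoding uses $N = O(r + \log m)$ features of three kinds: a \emph{predicate-tag} feature identifying a predicate of $Q$, $r$ \emph{argument} features ranging over the constants of $Q$ (padded with a dummy value for predicates of smaller arity), and a single binary \emph{marker} feature. A ground atom $p(a_1,\ldots,a_k)$ is represented by the pair $(s_p^{\vec a}, t_p^{\vec a})$ of outcomes that agree on the predicate-tag and argument features and differ only on the marker. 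Each Datalog clause $p(\vec X) \leftarrow p_1(\vec Y_1),\ldots,p_\ell(\vec Y_\ell)$ becomes the recursive LCP rule
\[
   \mbox{\code{d}}(s_p^{\vec X}, t_p^{\vec X}) \leftarrow \mbox{\code{dom}}(s_{p_1}^{\vec Y_1}, t_{p_1}^{\vec Y_1}), \ldots, \mbox{\code{dom}}(s_{p_\ell}^{\vec Y_\ell}, t_{p_\ell}^{\vec Y_\ell})
\]
and the query to be decided becomes \code{dom}$(s_q, t_q)$.

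The main obstacle, and the reason for introducing the marker feature, is to prevent spurious atoms from arising in the transitive closure defining \code{dom/2}: one must guarantee that $\mbox{\code{dom}}(s_p^{\vec a}, t_p^{\vec a})$ holds exactly when $p(\vec a)$ is derivable in $Q$, and never because of some unintended chain through unrelated \code{d/2} edges. Since every \code{d/2} edge produced by the translated rules has marker $0$ at its source and $1$ at its target, no two such edges can be composed; hence $\mbox{\code{dom}}(s_p^{\vec a}, t_p^{\vec a})$ coincides with $\mbox{\code{d}}(s_p^{\vec a}, t_p^{\vec a})$, and a straightforward induction on the height of Datalog proof trees matches the derivable \code{dom/2} atoms with the Datalog fixpoint of $Q$. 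The translation is clearly polynomial in $n$, so EXPTIME-hardness transfers from Datalog query evaluation to recursive LCP dominance.
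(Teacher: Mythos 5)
Your proposal is correct and, for the upper bound, identical in spirit to the paper's: the paper's entire proof is the one-line remark that once the \code{d/2} clauses are non-linear, ``the combined complexity for full Datalog comes into play,'' which yields EXPTIME membership exactly as you argue. Where you genuinely add something is the lower bound. The paper leaves the EXPTIME-hardness direction implicit (it only gestures at the general Datalog result without exhibiting a reduction \emph{into} recursive LCP theories), whereas you construct one explicitly: encoding ground atoms of an arbitrary Datalog program as marker-separated outcome pairs, turning each clause into a recursive \code{d/2} rule with \code{dom/2} body atoms, and using the marker feature to kill all compositions in the transitive closure so that \code{dom} collapses to \code{d} on the encoded pairs. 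That last point is the real content of the reduction and you identify it correctly; without it, spurious flipping chains could create \code{dom/2} facts with no Datalog counterpart. Two small caveats: the hardness you obtain is in the size of the whole theory (number of rules, which is polynomial in $|Q|$) rather than in $N$ alone, but this matches the paper's own combined-complexity framing and the same caveat applies to its flat-case hardness claim; and your feature count is really $r+2$ with multi-valued domains (the $\log m$ is only needed if you insist on binary features), which is immaterial. Overall your argument is more complete than the one in the paper.
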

\begin{proof}
  The proof is as above, except that the \code{d/2} clauses may no
  longer be linear, hence the combined complexity for full Datalog
  comes into play.
\end{proof}
We note in passing -- and document in the supplementary material of
this paper -- that the connection with Datalog allows for a simple and
direct proof of the {\sc PSPACE}-hardness of dominance for
CP-nets. We show that the well-known {\sc PSPACE-hard} problem of
determining whether a deterministic Turing Machine can accept the
empty string without ever moving out of the first $k$ tape cells can
be reduced to checking dominance queries for CP-nets by modifying
slightly the proof for Datalog in \cite[Theorem
  4.5]{Gottlob:2003:CSD:794095.794102}.    

%

In practice, the solutions of the dominance problem can be found using
tabling on the \code{dom/2} predicate. 

Note that in tree structured CP-nets a dominance query can be computed in
time linear in $N$ \cite{boutilier2004cp,francesi}. We observe that a procedure similar to 
\cite{francesi} can be used for CP-theories. We use the generalized
dependency graph $G$ described in \cite{Wilson04extendingcp-nets} (see
Section \ref{wilson}, this includes ``importance'' edges and
dependency edges), and when $G$ is a tree, we apply the dominance procedure of
\cite{francesi}.

\subsection{Consistency and Outcome optimization}\label{sec:optimization}

Consistency is determined by invoking theq query 
\code{?-consistent.} This takes advantage of the tabling of the
\code{dom/2} predicate.

The following theorem affirms that
consistency remains in PSPACE even when the language for preferences
is extended beyond CP-nets to flat LCP rules, and it is a generalization of Theorem 3 in
\cite{goldsmith2008computational}.
\begin{theorem}
  Given a flat LCP theory $P$ over $N$ features, deciding
  consistency is {\sc PSPACE}-complete in $N$.
\end{theorem}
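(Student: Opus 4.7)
The plan is to establish hardness by reduction from the known PSPACE-complete consistency problem for GCP-nets, and membership by combining the PSPACE dominance decision procedure of Theorem~\ref{dominance-flat} with a nondeterministic guess followed by an appeal to Savitch's theorem.

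For PSPACE-hardness, I would appeal directly to Theorem~3 of~\cite{goldsmith2008computational}, which shows consistency of GCP-nets is PSPACE-hard. Since every GCP-net rule is a Flat LCP-rule (as noted after the proof of Theorem~\ref{th:CP-GCP}, a GCP-net rule is simply a flat LCP-rule with $Z=\emptyset$, $|M|=1$, $L\cup U=\emptyset$), and the translation is logic-preserving with respect to the \code{dom/2} predicate, consistency for GCP-nets reduces in polynomial time to consistency for flat LCP-theories. This gives PSPACE-hardness immediately.

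For the upper bound, I would show that \emph{inconsistency} lies in NPSPACE, which by Savitch's theorem equals PSPACE; since PSPACE is closed under complementation, consistency is in PSPACE as well. A theory is inconsistent iff there exists a ground outcome \code{s} with $\mathcal{L}(P)\vdash \mbox{\code{dom(s,s)}}$. An outcome is an $N$-tuple of domain constants and hence has representation size linear in $N$, so we may nondeterministically guess such an \code{s} using only polynomial space. Having guessed \code{s}, we invoke the PSPACE procedure of Theorem~\ref{dominance-flat} to decide \code{dom(s,s)}; recall that this procedure maintains only two ground \code{dom/2} facts at a time and non-deterministically applies the linear recursive clause to advance, so polynomial working space suffices. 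Accepting iff the dominance check succeeds yields an NPSPACE procedure for inconsistency.

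The main obstacle I anticipate is purely bookkeeping: making sure the PSPACE dominance procedure from Theorem~\ref{dominance-flat} can be instantiated as a black-box subroutine with input \code{s} reused for both arguments, without the composition blowing up space. Since the nondeterministic guess of \code{s} fixes a pair of identical constants written once into the workspace, and the dominance procedure thereafter only rewrites its own scratch area, the two stages compose cleanly within polynomial space. Together with Savitch's theorem and closure of PSPACE under complement, this delivers the matching upper bound and completes the proof.
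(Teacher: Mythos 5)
Your proposal is correct and follows essentially the same route as the paper: the paper's (very terse) proof likewise obtains membership by reducing consistency to the entailment check underlying Theorem~\ref{dominance-flat} (guess an outcome and run the nondeterministic polynomial-space dominance procedure), and obtains hardness by noting the theorem generalizes Theorem~3 of \cite{goldsmith2008computational} via the embedding of GCP-nets into flat LCP theories. You have simply made explicit the bookkeeping (Savitch, closure under complement) that the paper leaves implicit.
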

The proof follows directly from the proof of Theorem~\ref{dominance-flat}
since consistency is reduced to checking entailment.

\begin{theorem}
  Given a recursive LCP theory $P$ over $N$ features, deciding
  consistency is {\sc EXPTIME}-complete in $N$.
\end{theorem}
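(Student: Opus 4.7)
\emph{Upper bound.} After flattening the single function symbol \code{o/N} into $N$ extra positional arguments, as the paper already notes is always possible, the user theory $P$ together with the runtime clauses for \code{dom/2}, \code{outcome/1} and \code{consistent/0} is a plain Datalog program $\Pi$. The LCP theory is consistent iff no ground atom of the form \code{dom(s,s)} is derivable from $\Pi$. Since the combined complexity of Boolean Datalog evaluation is {\sc EXPTIME}-complete \cite{Gottlob:2003:CSD:794095.794102,Dantsin:2001:CEP:502807.502810}, one can compute the set of derivable \code{dom/2} atoms and scan it for a diagonal entry, all within EXPTIME. This mirrors the upper-bound half of Theorem~\ref{dominance-recursive-lcp}; only the final check (is any cycle present?) differs from the dominance version.

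\emph{Lower bound.} I would reduce from dominance for recursive LCP theories, which is EXPTIME-hard by Theorem~\ref{dominance-recursive-lcp}. Fix an instance $(P,\mathtt{s},\mathtt{t})$ produced by that construction and let $P'$ extend $P$ by the single flat preference clause \code{d(t,s).} Any \code{dom/2} cycle in $P'$ that is not already present in $P$ must traverse this new edge, and closing such a cycle forces $P'$ to derive \mbox{\code{dom(s,t)}}. Crucially, \code{d(t,s)} does not appear in the body of any preference or runtime rule of $P$, so adding it enables no previously impossible derivation in the recursive machinery; hence $P' \vdash \mbox{\code{dom(s,t)}}$ iff $P \vdash \mbox{\code{dom(s,t)}}$. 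Therefore $P'$ is inconsistent iff $P$ was already inconsistent or $P \vdash \mbox{\code{dom(s,t)}}$. Since EXPTIME is closed under complement, this transfers EXPTIME-hardness from dominance to (in)consistency.

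\emph{Main obstacle.} The delicate point is that the reduction is informative only if the dominance-hardness instance $P$ can be taken to be consistent; otherwise $P'$ is trivially inconsistent and the reduction carries no information. I would discharge this by inspecting the hardness construction underlying Theorem~\ref{dominance-recursive-lcp}, which adapts the Datalog encoding of alternating polynomial-space Turing machines from \cite[Theorem 4.5]{Gottlob:2003:CSD:794095.794102}. Its derivations correspond to one-directional, well-founded transitions between Turing-machine configurations, so the induced \code{d/2} graph is acyclic by construction and $P$ is consistent. A secondary technicality is verifying that the flattening of \code{o/N} in the upper bound genuinely produces a Datalog program of size polynomial in the input; since $N$ is given in unary by the feature list, this is routine. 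Combining the two bounds yields EXPTIME-completeness.
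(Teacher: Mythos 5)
Your upper bound is essentially the paper's: flatten \code{o/N}, observe that the resulting program is full Datalog, and invoke the {\sc EXPTIME} combined complexity of Datalog to decide whether any \code{dom(s,s)} is entailed. Your lower bound, by contrast, takes a genuinely different route from the paper, which offers no explicit reduction at all (it says only ``as above, noting that the combined complexity for full Datalog is {\sc EXPTIME}''). An explicit reduction from dominance to inconsistency by adding the reverse edge \code{d(t,s)} is the right instinct and is exactly how such results are usually made rigorous --- but as stated it has a gap.

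The gap is in the claim that ${\cal L}(P') \vdash \mbox{\code{dom(s,t)}}$ iff ${\cal L}(P) \vdash \mbox{\code{dom(s,t)}}$, which you justify by noting that \code{d(t,s)} does not occur in the body of any rule of $P$. That justification addresses the wrong dependency. In a \emph{recursive} LCP theory the bodies of \code{d/2} clauses contain \code{dom/2} goals, and adding the single edge \code{d(t,s)} enlarges the extension of \code{dom/2}: every outcome with a \code{d}-path to \code{t} now dominates everything reachable from \code{s}. Any recursive preference rule whose \code{dom/2} subgoal is satisfied by one of these new facts can fire, producing \code{d/2} edges that were underivable in $P$, and those edges can close a cycle having nothing to do with \code{dom(s,t)}. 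Concretely, if $P$ contains \code{d(a,b) :- dom(t,s).} together with the flat fact \code{d(b,a).}, then $P$ can be consistent and fail to entail \code{dom(s,t)}, yet $P'$ derives \code{dom(a,a)}; so the equivalence ``$P'$ inconsistent iff $P$ inconsistent or $P \vdash \mbox{\code{dom(s,t)}}$'' fails in general. To repair the argument you must either verify that the specific hardness instances for Theorem~\ref{dominance-recursive-lcp} contain no recursive rule whose \code{dom/2} subgoal can be newly satisfied after adding \code{d(t,s)} --- your inspection of the construction only establishes consistency of $P$, which is the easier of the two obstacles --- or reduce directly from the {\sc EXPTIME}-hard Datalog/ATM problem to consistency, arranging the encoding so that reaching the accepting configuration itself closes the cycle. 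Your secondary point about the polynomial size of the flattening is fine.
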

 As above, noting that the combined complexity for full Datalog is
 {\sc EXPTIME}.

For optimality, the user invokes the query \code{?-
  optimal(s).}  Note that \code{optimal/1} uses
partial order answer subsumption (see Section \ref{bg:datalog-tabled}).
In theory this may result in an exponential number of
calls to \code{dom/2} atoms, with each check taking exponential
time. This leads to:
\begin{theorem}
  Given a recursive LCP theory $P$ over $N$ features, deciding
  optimality is in {\sc EXPTIME} over $N$.
\end{theorem}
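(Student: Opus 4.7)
The plan is to establish the upper bound by reducing an \code{optimal(s)} query to a bounded collection of dominance queries, each decidable in {\sc EXPTIME} by Theorem~\ref{dominance-recursive-lcp}. Given a (possibly partial) outcome term \code{s}, I would first enumerate its ground completions; since every feature has a finite (binary) domain, there are at most $2^N$ such completions. For each candidate completion $t$, I would loop over all ground outcomes $u$ (again at most $2^N$ of them) and invoke the dominance decision procedure of Theorem~\ref{dominance-recursive-lcp} to test whether ${\cal L}(P) \vdash \mbox{\code{dom(u,t)}}$. The candidate $t$ is declared an optimal completion iff no such $u$ exists.

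Counting then yields the claimed bound: the procedure makes at most $2^N \cdot 2^N = 2^{2N}$ dominance calls, each running in time $2^{p(N)}$ for some polynomial $p$ by Theorem~\ref{dominance-recursive-lcp}, for a total of $2^{2N + p(N)} = 2^{q(N)}$ steps for some polynomial $q$. This is {\sc EXPTIME} in $N$.

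The main obstacle is connecting this abstract brute-force argument to the concrete implementation sketched in the paper, in which \code{optimal/1} is realized by tabled partial order answer subsumption over \code{dom/2}. I would have to argue that the tabling behaviour does not blow the bound up further. The key observation is that each distinct ground \code{dom(u,t)} subgoal is memoized and hence evaluated at most once, capping the number of dominance evaluations at $2^{2N}$; the partial order subsumption mechanism then keeps only the undominated \code{outcome/1} answers, so the set of answers returned by \code{optimal/1} is exactly the set of optimal completions of \code{s}. Combined with the single-call cost bound from Theorem~\ref{dominance-recursive-lcp}, this places the full tabled procedure within the claimed {\sc EXPTIME} budget.
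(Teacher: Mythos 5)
Your argument is correct and is essentially the paper's own (one-sentence) justification spelled out in detail: at most exponentially many \code{dom/2} queries, each answerable in exponential time by the recursive-dominance bound, yields an overall $2^{\mathrm{poly}(N)}$ procedure. The only nuance worth noting is that the paper's definition of an optimal completion of \code{s} quantifies only over other ground instantiations of \code{s} rather than over all outcomes, but this does not affect the counting or the {\sc EXPTIME} bound.
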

For now we leave as open the corresponding problem for flat recursive
theories (note that this problem is {\sc PSPACE}-complete for CP-nets).

\subsubsection{Optimization and Consistency for acyclic dependency graphs}
In acyclic CP-nets the sweep-forward procedure 
\cite{boutilier2004cp} finds the unique optimal outcome (or completion)
in polynomial time. A similar result holds for 
\cite{Wilson04extendingcp-nets}.

Under the assumptions of consistency and acyclicity,
an optimal outcome (and completion) can also be found for Flat LCP
theories in polynomial time, using the following 
algorithm {\em
  Acyclic-LCP-Opt} 
  generalizing sweep-forward (see Section~\ref{sec:implementation} for
  implementation details):
\begin{enumerate}
\item Given a set of LCP-rules we compute the dependency graph $G$,
  and we check if $G$ is acyclic.  
\item We compute a total order ${\cal O}=\{X_1,\ldots, X_N\}$ over $\Var$ as a
  linearization of  the topological order defined by $G$.
\item For each variable $X_i$, chosen following ${\cal O}$, we
  consider a set $W_i \subseteq \Var{}$ such that it contains all the
  variables that change the value jointly with $X_i$ in at least one
  rule ($W_i$ could contain only $X_i$).  Using the rules in the
  LCP-theory that involve the variables in $W_i$, and given the
  assignments for the variables $X_1,\ldots, X_{i-1}$,  we generate an
  ordering over the partial outcomes defined on the variables in
  $W_i$. We assign to $X_i$ the $X_i$ value of the top element of
  the ordering that satisfies \code{outcome/1} for at least one
  completion (the completion has the given
  assignment to $X_1,\ldots, X_{i-1}$ and an arbitrary assignment to
  $\{X_{i+1},\ldots, X_N\} \setminus W_i$). 
\item We repeat the previous step for all the features in $\Var{}$
  (following ${\cal O}$), never changing the value of an assigned
  variable. 
\end{enumerate}

\begin{proposition}
The outcome obtained with the {\em Acyclic-LCP-Opt} procedure is
an optimal outcome for acyclic LCP-theories.
\end{proposition}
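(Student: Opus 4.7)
\smallskip\noindent\textit{Proof plan.}
The plan is to proceed by induction on the position $i$ in the topological linearization $\mathcal{O} = X_1, \ldots, X_N$ computed in step~2, establishing the invariant that after the $i$-th iteration of the algorithm the partial assignment $s_i = (s(X_1), \ldots, s(X_i))$ extends to at least one undominated outcome of the theory. Since the algorithm terminates with $s_N = s$, the invariant at $i = N$ is exactly the desired conclusion that $s$ is optimal.

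The base case $i = 0$ reduces to the existence of an undominated outcome for a consistent acyclic LCP-theory, a generalisation of the classical sweep-forward guarantee for CP-nets that follows from finiteness of the outcome space together with the absence of cycles in the dominance relation. For the inductive step, I would fix an undominated outcome $u$ extending $s_{i-1}$, and when $u(X_i) \neq v_i := s(X_i)$ construct from $u$ a new undominated outcome $u'$ that extends $s_i$. The key structural observation is that, by the normal-form characterisation of flat LCP rules together with acyclicity of the dependency graph, every rule whose conclusion can change the value of $X_i$ without altering $X_1, \ldots, X_{i-1}$ has its parent set $J$ contained in $\{X_1, \ldots, X_{i-1}\}$ and its changing set $M$ contained in $W_i$. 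Hence the local ordering computed in step~3 over partial outcomes on $W_i$ captures every rule-application that could separate the candidates $v_i$ and $u(X_i)$ in the context fixed by $s_{i-1}$. Because the algorithm selected $v_i$ as the $X_i$-coordinate of a top element of this ordering with a feasible completion, no worsening sequence can lead from a completion of $s_i$ to $u$; reversing this observation yields the desired $u'$.

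The main obstacle I anticipate is a commutation argument needed to reduce a global worsening chain to a local one on $W_i$: an intermediate outcome of such a chain may temporarily perturb some variable $X_l$ with $l > i$, and one must show that these perturbations can be rearranged away without disturbing the flips involving $X_i$. I plan to justify this by appealing again to acyclicity, which forbids any rule flipping $X_l$ from having a parent condition on $X_i$, so that flips on later variables commute past flips on $X_i$. The hard-constraint predicate \texttt{outcome/1} is handled by the feasibility check built into step~3 of the algorithm, and global consistency of the theory ensures that undominated outcomes exist at every stage of the induction.
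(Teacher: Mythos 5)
Your plan takes a genuinely different route from the paper, which gives a short direct argument by contradiction: assume some $o'$ dominates the output $o$, take the worsening chain from $o'$ to $o$, pick the \emph{first} variable $X$ in the topological linearization on which $o'$ and $o$ differ, locate the step of the chain where $X$ flips from $o'$'s value to $o$'s value, and observe that the rule licensing that flip (whose parents all precede $X$ and hence agree with the values already fixed by the algorithm) witnesses that the algorithm's choice for $X$ was not a top element of the local ordering --- a contradiction. Your forward induction with the invariant ``$s_i$ extends to an undominated outcome'' would, if completed, prove the same statement, but it front-loads the full difficulty of the theorem into every inductive step rather than into a single chain analysis.

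As written, the inductive step has a genuine gap. To push the invariant from $i-1$ to $i$ you must exhibit an undominated outcome extending $s_i$, and your justification --- that no worsening sequence leads from a completion of $s_i$ to the particular undominated outcome $u$, and that ``reversing this observation yields the desired $u'$'' --- does not do this: ruling out domination \emph{by $u$} says nothing about domination of your candidate $u'$ by the (exponentially many) other outcomes, so the existence of an undominated completion of $s_i$ is exactly what remains to be proved. Separately, the commutation lemma you invoke is justified by a reversed reading of acyclicity: in a topological order, a rule flipping a \emph{later} variable $X_l$ ($l>i$) may perfectly well have a parent condition on the earlier $X_i$; what acyclicity actually gives you is that a rule flipping $X_i$ cannot condition on any $X_l$ with $l>i$, i.e.{} flips of $X_i$ are insensitive to the values of later variables. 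That corrected fact is the one the paper's argument implicitly uses, and it is also the one you would need to rearrange a chain; as stated, your commutation claim is false and the rearrangement does not go through. Finally, your structural observation accounts only for the parent set $J$ and the changing set $M$ of the normal form, but a flat LCP rule may also have a nonempty $Z$ (and $L$, $U$) component, so a single rule application can alter variables outside $W_i$; any complete proof must handle this case as well.
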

\begin{proof}
We suppose by contradiction that exists an outcome $o'$ such that $o'
\succ o$. This implies that exists a chain of outcomes $o'=o_1,
\cdots, o_m=o$ such that $\forall i \in \{1, \cdots, m-1\}$ exists a
rule $R$ that implies $o_i \succ o_{i+1}$. Considering a linearization
$X_1, \cdots , X_N$ of the graph $G$ associated to our set of rules
${\cal C}$, we take the first variable $X$ in this order such that
$o'\restriction_X \not = o \restriction_X$ . Thus there exist a rule
$R$ and an index $i \in \{1, \cdots, m-1\}$ such that
$o'\restriction_X=o_1\restriction_X= \cdots =o_i\restriction_X \not =
o_{i+1} \restriction_X= \cdots = o_m\restriction_X=o\restriction_X$
and $o_i \succ o_{i+1}$. This implies that  $o\restriction_X$ is not
the maximal element  in the set of rules that involve $X$, that is a
contradiction. 
 \end{proof}

\label{sec:Acyclic-LCP-Opt}
In LCP-theories it is possible to have many different optimal outcomes: 
we can obtain the whole set of optimal outcomes using the {\em Acyclic-LCP-Opt} procedure. 
If there is more then one optimal outcome this means that there exists some variable $X$ that in the third step of the algorithm has more then one top element. 
Running in parallel all these possible assignments we obtain the whole set of optimal outcomes.

The complexity of the procedure is $O(d^{w}*N)$ where $w=\max_{i}|W_i|$: the third step of
the procedure could involve all the partial outcomes defined on
$W_i$. If the program bounds $w$, the procedure become linear in
$N$. Note that if the LCP-theory corresponds to a CP-net or a
CP-theory then $|W_i|=1~ \forall i$ and the algorithm coincides with the sweep-forward
procedure for CP-nets, and the procedure introduced in
\cite{Wilson04extendingcp-nets} for CP-theories. 

%

We can use this procedure also to compute the optimal completion of a
given partial outcome, simply considering the partial outcome as
pre-assigned values for a subset of features.  

Recursive LCP-theories may require $m$ dominance queries, where $m$ is
the number of dominance goals in the body of the input preference
rules. Since we use tabling, the time cost is amortized over all calls from
the problem-solver (in lieu of space). With this change, the procedure
described above can be used. Because of the dominance queries, the
optimality procedure is PSpace-complete. 

\subsection{Decreasing complexity using evidence specification}
We note that if evidence for some variables is given, the resulting
simplified LCP theory may have the structure of one of the special
cases discussed above, and hence optimality and dominance queries may
be answered using specialized, polynomial procedures.  To this end,
the implementation needs to maintain a dynamic dependency graph that
ignores features for which values have been provided.

 \section{Experimental evaluation}\label{sec:implementation}

We have developed a compiler for LCP theories, also called LCP. The
compiler and associated tooling will be made avaiable on Github as an
open source project under the Eclipse Public Licence.

The compiler reads an LCP-theory, builds the dependency graph and
checks whether it represents an acyclic CP-net. If so, it performs a
linearization of the dependency graph, and produces a pre-digested
representation of the theory. Otherwise it emits the clauses unchanged
so that the standard default (tabled) algorithms optimality,
consistency and dominance can be used.

In more detail, the compiler captures (a linearization of) the dependency order in a clause 

\begin{tabular}{l}
{\footnotesize
  \code{dependency([}$a_1, a_2,\ldots,a_N$\code{]).}}
\end{tabular}

\noindent where $a_i \in 1\ldots N$ (features are implemented as
Prolog integers), and if 
$a_j$ depends on $a_i$, then $i < j$. Suppose $a_p$
depends on $a_{i_1}, \ldots, a_{i_k}$. 
If the input LCP program specifies that if each of the
features $a_{i_1}, \ldots, a_{i_k}$ had values $x_{i_1}, \ldots, 
x_{i_k}$ respectively, then the known order of values of $a_p$ is given by
(best) $w_1, \ldots, w_r$ (worst), then the compiler emits the fact

\begin{tabular}{l}
{\footnotesize
 \code{preference([}$x_{p-1},\ldots, x_1$\code{], [}$w_1, \ldots, w_r$\code{]).}
}
\end{tabular}

\noindent with $x_{i_1},\ldots, x_{i_k}$ as constant, 
and the remaining $x_i$ as unique variables (occur only once in the
clause). Thus we use Prolog
unification to select the correct preference clause to use, given the
current partial outcome \code{[v\_p, \ldots, v\_1]} specifying values
for the first \code{p} attributes (in reverse dependency order). 

The following code for \code{optimize/1} uses these clauses and
implements {\em Acyclic-LCP-Opt}: 
{\footnotesize
\begin{verbatim}
optimize(O) :- 
   O=o(_,_), dependency(D), reorder(D, O, AList), optimize_a([], AList).
reorder([], _, []).
reorder([X| Xs], O, [V | Vs]) :- arg(X, O, V), reorder(Xs, O, Vs).
optimize_a(_, []).
optimize_a(Upargs, [Xs|R]) :-
   select(Upargs, Xs), append(Xs, Upargs, Upargs1), optimize_a(Upargs1, R).
select(Us, []).
select(Us, [X | R]) :- nonvar(X), select(Us, R).
select(Us, [X | R]) :- var(X), preference(Upargs, [X|_]), select(Us ,R).
\end{verbatim}
}
We have also implemented a CP-net generator. 
Generating CP-nets i.i.d. is non-trivial \cite{tomallen} and therefore we use an approximation
method that randomly generates acyclic
CP-nets with $N$ features, given a maximum number of dependencies for each
feature. We consider a fixed ordering $X_1,\ldots ,X_N$ of features. We also
take as input the maximum in-degree for each feature, $k$. We first
generate the acyclic dependency graph. For each feature $X_i$, we
randomly choose its in-degree $d \in 0..\min\{k,{i-1}\}$. 
Next, we randomly choose $d$ parents from the features
$\{X_1, \ldots, X_{i-1} \}$. When the graph is built, we fill in the CP-tables
choosing randomly one element of the domain (since the domain is
binary). 
The resulting CP-net is written out as an LCP theory, using
XSB Version 3.5.0 syntax \cite{xsbonline}. 

We have run two different kinds of experiment: in the first
with a fixed upper bound for the number of dependencies for
each feature, we varied the number of features from $5$ to $200$ and
measure running time for optimality queries. In the
second experiment, fixing the number of features, we varied the upper
bound of dependencies from $1$ to $10$. In both experiments we asked for
the optimal outcome $100$ times and then we computed the average elapsed
time to output the result. 

Figure \ref{fig:fig1} shows the results for the experiment where the
upper bound for the number of dependencies is fixed to $6$. The elapsed
time to compute the optimal outcome grows quadratically in the number
of features. This is in line with our results as summarized in Table
\ref{table:complexity1}. (The runtime is not linear because each step
involves checking for the value of parents using unification on $O(N)$
terms.)  

\begin{figure}[t] 
\centering 
\includegraphics[height=2.5in]{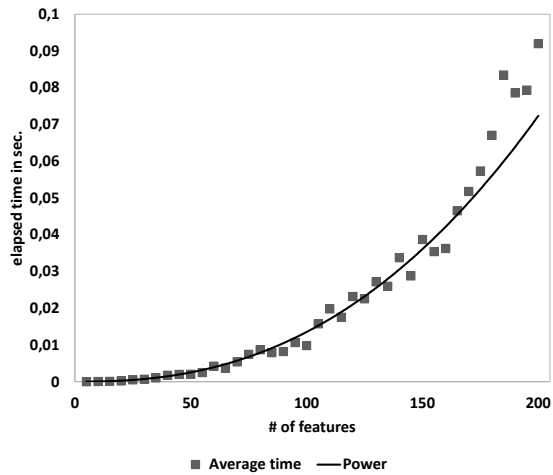}
\caption{Optimal outcome performances.}
\label{fig:fig1}
\end{figure}

\section{Conclusion and future work}\label{sec:conclusion}
We have presented a new framework for conditional preferences, based
on expressing preferences using Datalog. We have shown how dominance,
consistency and optimality queries can be formulated directly in
Datalog and implemented in modern tabled Prolog systems such as XSB
Prolog. We have also analyzed the complexity of the different
algorithms, developed efficient procedures for some common use cases,
and implemented a translator that exploits these algorithms.

Much work lies ahead. Table~\ref{table:complexity1} contains some open complexity
questions. We hope to exploit Datalog theory to develop more efficient
special cases. We hope to use the implemented LCP system in real-life
applications to determine the adequacy of the LCP system. 

\paragraph{Acknowledgements.} We gratefully acknowledge conversations
with David S.~Warren and Terrance Swift. These conversations led to a
small code-change in the XSB compiler to better support LCP theories.
We also acknowledge conversations with Francesca Rossi.

\newpage

\bibliographystyle{alpha}

\begin{thebibliography}{BBD{\etalchar{+}}04b}

\bibitem[AGM14]{tomallen}
T.E. Allen, J.~Goldsmith, and N.~Mattei.
\newblock Counting, ranking, and randomly generating {CP}-nets.
\newblock In {\em In Proceedings of the 8th Multidisciplinary Workshop on
  Advances in Preference Handling (MPREF)}, 2014.

\bibitem[BBD{\etalchar{+}}04a]{cp+constraints}
C.~Boutilier, I.~Brafman, C.~Domshlak, H.~Hoos, and D~Poole.
\newblock Preference-based {C}onstrained {O}ptimization with {CP}-nets.
\newblock {\em Computational Intelligence}, 20(2):137--157, 2004.

\bibitem[BBD{\etalchar{+}}04b]{boutilier2004cp}
C.~Boutilier, R.I. Brafman, C.~Domshlak, H.H. Hoos, and D.~Poole.
\newblock {CP}-nets: {A} tool for representing and reasoning with conditional
  ceteris paribus preference statements.
\newblock {\em Journal of Artificial Intelligence Research}, 21:135--191, 2004.

\bibitem[BFMZ13]{francesi}
D.~Bigot, H.~Fargier, J.~Mengin, and B.~Zanuttini.
\newblock Probabilistic conditional preference networks.
\newblock In {\em Proc. of the 29th International Conference on Uncertainty in
  Artificial Intelligence (UAI)}, 2013.

\bibitem[CGT89]{Ceri:1989:YAW:627272.627357}
S.~Ceri, G.~Gottlob, and L.~Tanca.
\newblock What you always wanted to know about {D}atalog (and never dared to
  ask).
\newblock {\em IEEE Trans. on Knowl. and Data Eng.}, 1(1):146--166, March 1989.

\bibitem[DB02]{domshlak2002cp}
C.~Domshlak and R.I. Brafman.
\newblock {CP}-nets: {R}easoning and consistency testing.
\newblock In {\em Proc. 8th International Conference on Principles of Knowledge
  Representation and Reasoning (KR)}, 2002.

\bibitem[DEGV01]{Dantsin:2001:CEP:502807.502810}
E.~Dantsin, T.~Eiter, G.~Gottlob, and A.~Voronkov.
\newblock Complexity and expressive power of logic programming.
\newblock {\em ACM Comput. Surv.}, 33(3):374--425, September 2001.

\bibitem[DPR{\etalchar{+}}06]{hard+soft+cpnets}
C.~Domshlak, S.~Prestwich, F.~Rossi, K.~Venable, and T.~Walsh.
\newblock Hard and soft constraints for reasoning about qualitative conditional
  preferences.
\newblock {\em J. Heuristics}, 12(4-5):263--285, 2006.

\bibitem[DRVW03]{domshlak03reasoning}
C.~Domshlak, F.~Rossi, K.B. Venable, and T.~Walsh.
\newblock Reasoning about soft constraints and conditional preferences:
  complexity results and approximation techniques.
\newblock In {\em Proc. of the 18th International Joint Conference on
  Artificial Intelligence (IJCAI)}, 2003.

\bibitem[FV99]{Feder:1999:CSM:298483.298498}
T.~Feder and M.~Vardi.
\newblock The {C}omputational {S}tructure of {M}onotone {M}onadic {SNP} and
  {C}onstraint {S}atisfaction: A {S}tudy {T}hrough {D}atalog and {G}roup
  {T}heory.
\newblock {\em SIAM J. Comput.}, 28(1):57--104, February 1999.

\bibitem[GLTW08]{goldsmith2008computational}
J.~Goldsmith, J.~Lang, M.~Truszczynski, and N.~Wilson.
\newblock The {C}omputational {C}omplexity of {D}ominance and {C}onsistency in
  {CP}-nets.
\newblock {\em Journal of Artificial Intelligence Research}, 33(1):403--432,
  2008.

\bibitem[GP03]{Gottlob:2003:CSD:794095.794102}
G.~Gottlob and C.~Papadimitriou.
\newblock On the complexity of single-rule datalog queries.
\newblock {\em Inf. Comput.}, 183(1):104--122, May 2003.

\bibitem[KKR95]{Kanellakis199526}
P.C. Kanellakis, G.M. Kuper, and P.Z. Revesz.
\newblock Constraint query languages.
\newblock {\em Journal of Computer and System Sciences}, 51(1):26 -- 52, 1995.

\bibitem[MMP{\etalchar{+}}13]{MaranInfluencedCPNets}
A.~Maran, N.~Maudet, M.~S. Pini, F.~Rossi, and K.~B. Venable.
\newblock A {F}ramework for {A}ggregating {I}nfluenced {CP}-nets and {I}ts
  {R}esistance to {B}ribery.
\newblock In {\em Proceedings of AAAI-27}, pages 668--674, 2013.

\bibitem[PRVW04]{Prestwich04constrainedcpnets}
S.~Prestwich, F.~Rossi, K.~B. Venable, and T.~Walsh.
\newblock Constrained {CP}-nets.
\newblock In {\em in Proceedings of CSCLP'04}, 2004.

\bibitem[RVW04]{rossi2004mcp}
F.~Rossi, K.B. Venable, and T.~Walsh.
\newblock {mCP} nets: representing and reasoning with preferences of multiple
  agents.
\newblock In {\em Proc. of the 19th AAAI Conference on Artificial Intelligence
  (AAAI)}, 2004.

\bibitem[SW10]{Swift:2010:TAS:1888743.1888771}
T.~Swift and D.~S. Warren.
\newblock Tabling with answer subsumption: Implementation, applications and
  performance.
\newblock In {\em Proceedings of the 12th European Conference on Logics in
  Artificial Intelligence}, JELIA'10, pages 300--312, Berlin, Heidelberg, 2010.
  Springer-Verlag.

\bibitem[SW12a]{tabling-xsb}
T.~Swift and D.~S. Warren.
\newblock {XSB}: Extending {P}rolog with {T}abled {L}ogic {P}rogramming.
\newblock {\em Theory Pract. Log. Program.}, 12(1-2):157--187, January 2012.

\bibitem[SW12b]{xsbonline}
T.~Swift and D.~S. Warren.
\newblock {XSB} home page.
\newblock \url{http://http://xsb.sourceforge.net/}, 2012.

\bibitem[Tom98]{Toman:1998:MEC:593226.593267}
D.~Toman.
\newblock Memoing {E}valuation for {C}onstraint {E}xtensions of {D}atalog.
\newblock {\em Constraints}, 2(3/4):337--359, January 1998.

\bibitem[Var82]{vardi}
M.~Vardi.
\newblock The complexity of relational query languages (extended abstract.
\newblock In {\em Proceedings of the Fourteenth Annual ACM Symposium on Theory
  of Computing (STOC Õ82}, pages 137--146, 1982.

\bibitem[Wil04]{Wilson04extendingcp-nets}
N.~Wilson.
\newblock Extending {CP}-{N}ets with {S}tronger {C}onditional {P}reference
  {S}tatements.
\newblock In {\em Proceedings of AAAI-04}, pages 735--741, 2004.

\bibitem[Wil09]{wilson09}
N.~Wilson.
\newblock Efficient {I}nference for {E}xpressive {C}omparative {P}reference
  {L}anguages.
\newblock In {\em Proceedings of IJCAI-09}, 2009.

\end{thebibliography}
\newcommand{\etalchar}[1]{$^{#1}$}

\end{document}